\crefname{equation}{}{}
\crefname{figure}{Fig.}{Figs.}
\crefname{table}{Table}{Tables}
\crefname{section}{Section}{Sections}
\crefname{lemma}{Lemma}{Lemmas}
\newcommand{\algref}[1]{Algorithm \ref{#1}}
\theoremstyle{plain}
\newtheorem{theorem}{Theorem}[]
\theoremstyle{definition}
\theoremstyle{remark}
\newtheorem*{rem}{Remark}
\begin{document}
%
\title{ Kernel Two-Dimensional Ridge Regression for Subspace Clustering }

\author{
	Chong Peng, Qian Zhang, Zhao Kang, Chenglizhao Chen, and Qiang Cheng
	\thanks{C.P., Q.Z., and C.C. are with College of Computer Science and Technology, Qingdao University. Z.K. is with School of Computer Science and Engineering, University of Electronic Science and Technology of China. Q.C. is with the Institute of Biomedical Informatics \& Department of Computer Science, University of Kentucky. 
	C.C. is the corresponding author. Contact information: cclz123@163.com}
}

\maketitle

\begin{abstract}
Subspace clustering methods have been widely studied recently. When the inputs are 2-dimensional (2D) data, existing subspace clustering methods usually convert them into vectors, which severely damages inherent structures and relationships from original data. In this paper, we propose a novel subspace clustering method for 2D data. It directly uses 2D data as inputs such that the learning of representations benefits from inherent structures and relationships of the data. It simultaneously seeks image projection and representation coefficients such that they mutually enhance each other and lead to powerful data representations. An efficient algorithm is developed to solve the proposed objective function with provable decreasing and convergence property. Extensive experimental results verify the effectiveness of the new method. 

\end{abstract}

\IEEEpeerreviewmaketitle

\section{Introduction}
\label{sec_intto}

High-dimensional data are ubiquitous and commonly used in various real-world applications such as computer vision and image processing. 
Often times, such data have latent low-dimensional structures rather than uniformly distributed.
To illustrate this, we show a simple example in \cref{fig_subspace}. 
Such phenomenons are often seen in real-world applications.
For example, face images lie in high-dimensional space, however they belong to a few number of subjects and form clear low-dimensional structures.
\begin{figure}[h]
	\centering
	{\includegraphics[width=0.6\columnwidth]{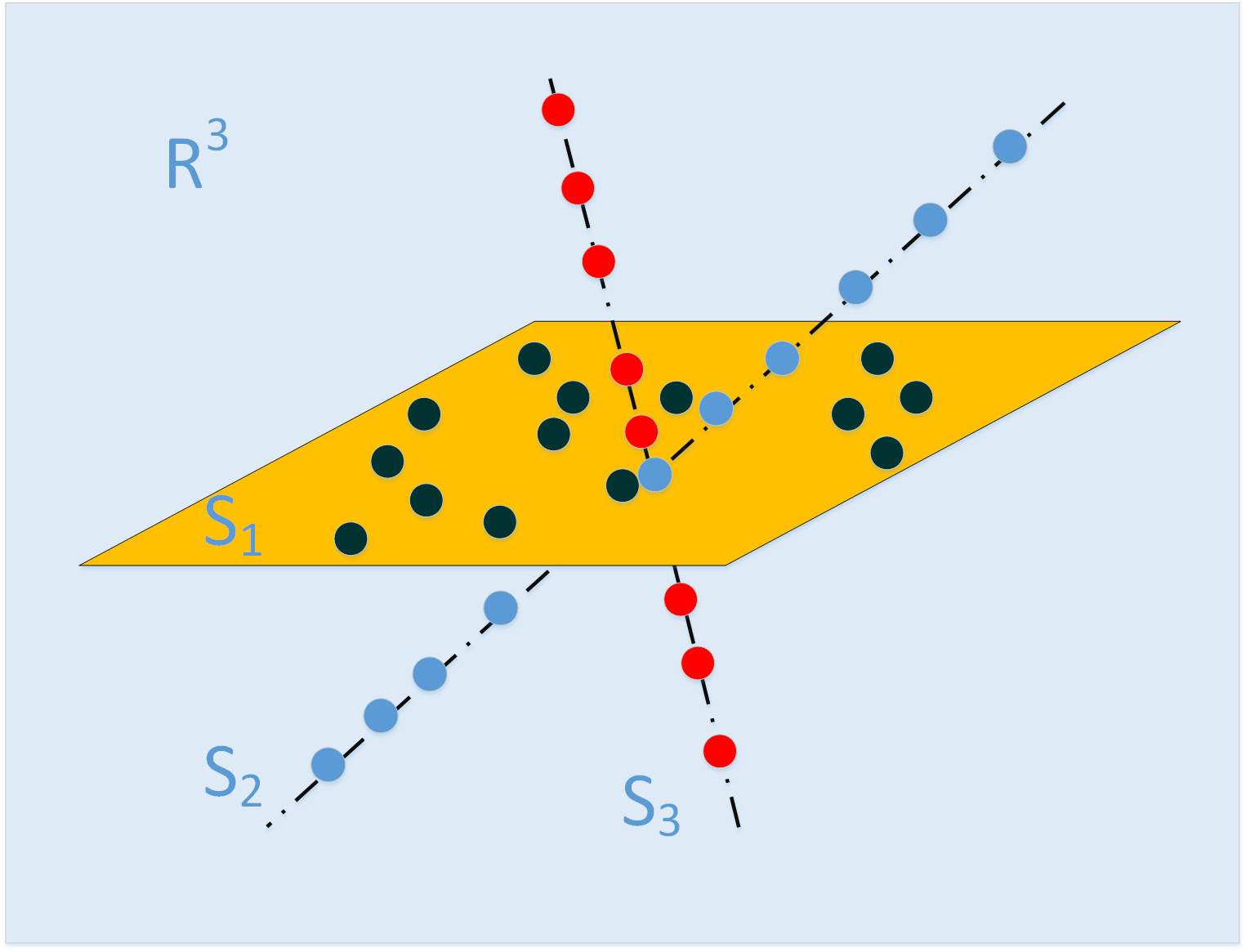}}
	\caption{ Example of high-dimensional data lying in low-dimensional subspaces. It is seen that rather than uniformly distributed in the 3-dimensional space, these data points lie on the union of two lines and one plane. }
	\label{fig_subspace}
\end{figure}
This inspires us to effectively represent high-dimensional data in low-dimensional subspaces \cite{peng2015subspace,liu2010robust}.
To recover such low-dimensional subspaces, it usually requires clustering the data into different groups.
Each of these groups can be fitted with a subspace and this procedure is known as subspace clustering or subspace segmentation. 

During the last decade, various types of subspace clustering algorithms have been developed.
These methods can be roughly categorized into 4 groups, 
including algebraic methods \cite{boult1991factorization,vidal2003generalized,ma2008estimation}, 
statistical methods \cite{gruber2004multibody,rao2010motion,ma2007segmentation}, 
iterative methods \cite{ho2003clustering,zhang2009median}, 
and spectral clustering based methods \cite{CHEN2018107,SUI2019261,CHEN2020107441}; see \cite{vidal2011subspace} for a review. 
%
%
%
%
%
%
%
Among them, spectral clustering-based methods have been popular with great success. 
Typical methods such as low-rank representation (LRR) \cite{liu2013robust,favaro2011closed} and sparse subspace clustering (SSC) \cite{elhamifar2013sparse} have drawn considerable attentions due to their efficiencies and elegant theories. 
The basic idea of LRR and SSC is the self-expressive property of the data, which suggests that each example of the data can be represented by the data as a dictionary.
With specific structure requirements on the representation matrix, the learned representation coefficient matrix by LRR and SSC admit low rankness and sparsity, respectively.
In the ideal case, such low-rank or sparse structure clearly shows group information of the data.
Recent work also attempts to merge the advantages of simultaneous low-rank and sparse learning with both low-rank and sparse regularization terms \cite{BRBIC2018247}.


It is pointed out that the nuclear norm is not accurate for rank approximation, which makes LRR less efficiency in learning accurate structure of the data \cite{peng2015subspace}. 
To overcome this drawback, recent works develop various more accurate non-convex approximations to the rank function, such as the log-determinant rank approximation, which significantly improves the learning performance \cite{peng2015subspace}. 
Some studies demonstrate the importance of feature learning for subspace clustering \cite{peng2016feature,patel2013latent}. 
For example, \cite{peng2016feature,peng2017integrating} seek a low-rank representation with respect to a subset of features, which alleviates the importance of rank approximations; 
\cite{patel2013latent} seeks a sparse representation of projected data in a latent low-dimensional space such that hidden structures of the data provide useful information. 
To consider nonlinear structures of the data, various approaches have been attempted. 
For example, grpah Laplacian is introduced to LRR \cite{liu2014enhancing}, which accounts for nonlinear relationships of the data on manifold; 
kernel is adopted in LRR \cite{xiao2016robust} and SSC \cite{patel2014kernel}, respectively, which seeks sparse representation of the data in nonlinear feature space. 
Other types of representation are also shown to be successful in subspace clustering, such as thresholding ridge regression \cite{peng2015robust} and simplex representation \cite{Xu2019Scaled}.
Other than dealing with noise effect with data reconstruction term, \cite{peng2015robust} alleviates the noise effect by vanishing the small values in the coefficients obtained from a ridge regression model. 
Simplex representation is similar to ridge regression, which seeks a representation matrix of the data with additional constraints on it \cite{Xu2019Scaled}.

Subspace clustering is often used in problems that deal with 2-dimensional (2D) data, with each example being a matrix.
Unfortunately, these methods usually suffer from a common issue when dealing with such data.
When handling 2D data, these methods usually convert all examples of the data to vectors in a pre-processing step, which severely damages inherent structural information of the data. 
This strategy omits the inherent structures and correlations of the original data which are essentially important, 
and building models with vectored data is not effective to filter the noise, occlusions or redundant information \cite{fu2016tensor}. 
To better handle 2D data, tensor-based methods have been considered in many areas, 
such as non-negative tensor factorization \cite{Benaroya2018Binaural}, tensor robust principal component analysis \cite{LIU2020107252,lu2016tensor}, tensor subspace learning \cite{Pan2019Tensor,zhang2015low}, etc. 
For tensor methods, tensor decomposition is often needed, where the main techniques include candecomp/parafac decomposition (CPD) and Tucker decomposition (TD). 
Tensor-based subspace clustering methods usually involve flatting and folding operations, which may not measure the true structures of the data \cite{Zhou2019Tensor}.
More importantly, tensor methods usually suffer from the following major issues: 
1) for CPD-based methods, it is generally NP-hard to compute the CP rank \cite{lu2016tensor,kolda2009tensor}; 
2) TD is not unique \cite{kolda2009tensor}; 
3) the application of a core tensor and a high-order tensor product would incure information loss of spatial details \cite{letexier2008noise}.

Besides tensor-based methods, some other approaches have been attempted to handle 2D data in many areas, 
such as 2-dimensional principal component analysis (2DPCA) \cite{yang2004two},
2-dimensional semi-nonnegative matrix factorization \cite{Peng2020TwoDimensionalSM}, and nuclear norm-based 2DPCA \cite{zhang2014nuclear}. 
2DPCA uses a projection matrix to extract the most representative spatial information from 2D data, which inspires us to recover low-dimensional subspaces of 2D data with such features.
Thus, to overcome the above mentioned key drawbacks of the current subspace clustering methods,
we propose a novel method for 2-dimensional data, which directly uses a projection matrix on the original 2D data, such that the rich structural information of the data can be maximally used in the learning process.
%
We briefly summarize key contributions of the paper as follows:
1) Unlike existing methods that perform vectorization to 2D data in a pre-processing step, we propose to learn a 2D projection matrix such that the most expressive structural information is retained in the spanned subspaces; 
2) The learning of projection and construction of representation are seamlessly integrated, such that these two tasks mutually enhance each other and lead to powerful representation; 
3) Kernel method for 2D data is introduced to our model, which explicitly considers nonlinear structures of the data;
4) Efficient optimization algorithm is developed with provable convergence guarantee;
5) The algorithm does not rely on augmented Lagrangian multiplier (ALM) type optimization as existing methods usually do, thus we do not need to introduce additional parameters in ALM framework;
6) Extensive experiments confirm the effectiveness of our method.


The rest of this paper is organized as follows. We briefly review some closely related methods in \cref{sec_related}. 
Then we introduce the proposed method, develop its optimization to obtain the representation matrix, and present how to perform clustering using the learned representation matrix in \cref{sec_proposed}.
We conduct extensive experiments to testify the effectiveness of the proposed method in \cref{sec_experiment}.
Finally, we conclude the paper in \cref{sec_conclusion}.

\section{Related Work}
\label{sec_related}
In this section, we briefly review some closely related subspace clustering methods.

Given the data matrix $A=[a_1,\cdots,a_n]\in\mathcal{R}^{d\times n}$ with each sample $a_i\in\mathcal{R}^d$, 
LRR seeks a low-rank representation of the data with the following minimization problem:
\begin{equation}
\min_{Z} \|A-AZ\|_{2,1} + \lambda \|Z\|_*,
\end{equation}
where $\|\cdot\|_{2,1}$ is the sum of column-wise $\ell_2$ norms of a matrix, $\|\cdot\|_*$ is the nuclear norm, $\lambda$ is a balancing parameter, and $Z\in\mathcal{R}^{n\times n}$ is the representation to be sought. Instead of seeking a low-rank representation, the SSC assumes sparse representation of the data, which leads to the following:
\begin{equation}
\begin{aligned}
&	\min_{Z,S,E} \|E\|_F^2 + \lambda \|Z\|_1 + \gamma \|S\|_1,	\\
&	s.t. \quad A = AZ + S + E, \textit{diag}(Z) = 0,
\end{aligned}
\end{equation}
where $\gamma$ is a balancing parameter and the constraint $\textit{diag}(Z) = 0$ avoids trivial solution to SSC.
The above models seek the representation of the data with the assumption of self-expressiveness. 
Various developments have been made based on LRR and SSC, such as nonlinear extensions \cite{Yin2016Laplacian,patel2014kernel} and feature integration approaches \cite{liu2011latent}.

\section{Kernel Two-Dimensional Ridge Regression}
\label{sec_proposed}

In this section, we will develop a new subspace clustering model based on ridge regression.
In the following of this section, we will present its formulation, optimization, and the clustering algorithm, respectively.

\subsection{Formulation of Kernel Two-Dimensional Ridge Regression}
Ridge regression-based data representation has been shown successful for high-dimensional data in both supervised \cite{peng2015robust} and unsupervised learning problems \cite{peng2020discriminative}.
For a collection of examples $\{X_i\}_{i=1}^{n}$ with each example $X_i\in\mathcal{R}^{a\times b}$ being a matrix,
inspired by \cite{peng2015robust,peng2020discriminative}, we seek a low-dimensional representation of the data with the following ridge regression model:
\begin{equation}
\label{eq_trr_2d}
\min_{Z} \sum_{i=1}^{n} \|X_i - \sum_{j=1}^{n} X_j Z_{ji} \|_F^2 + \lambda \|Z\|_F^2,
\end{equation}
where $\|\cdot\|_F$ is the Frobenius norm, and $\lambda\ge0$ is a balancing parameter. 
Here, unlike \cite{peng2015robust,elhamifar2009sparse} that vanish the diagonal elements of $Z$, we do not have such constraints due to the following two reasons: 
1) the example $X_i$ is in the intra-subspace of $X_i$ itself, thus it is meaningful if $Z_{ii}\not=0$; 
2) $\gamma>0$ excludes potentially trivial solutions such as $I_n$, where $I_n$ is an identity matrix of size $n\times n$. 

It is straightforward that \cref{eq_trr_2d} is equivalent to seeking the representation $Z$ with vectored data due to the nature of element-wise operation of the squared Frobenius norm. 
To retain inherent spatial information of the data in the learning process, we introduce a projection vector $p\in\mathcal{R}^b$, i.e., a direction, 
which projects the data to a subspace in which the most expressive 2D feature of the data is retained.
That is, each example $X_i$ is projected as $X_ipp^T$ to the subspace spanned by $p$. 
To mutually enhance the learning tasks of the projection and representation, we propose to simultaneously seek the representation with projected data as follows:
\begin{equation}
\label{eq_obj_pp}
\begin{aligned}
\min_{p^Tp = 1,Z} & \sum_{i=1}^{n}\Big\| X_ipp^T - \sum_{j=1}^{n} X_j pp^T z_{ji} \Big\|_F^2 \\
& + \lambda \sum_{i=1}^{n}\|X_i - X_i pp^T \|_F^2 + \gamma \|Z\|_F^2,
\end{aligned}
\end{equation}
where $\gamma \ge 0$ is a balancing parameter. 
It is seen that the projection vector $p$ captures spatial information of the data and the representation is sought with the projected data and thus benefits from spatial information of the data. 
The first term of \cref{eq_obj_pp} can be derived as $ \sum_{i=1}^n \| X_ipp^T - \sum_{j=1}^{n} X_j pp^T z_{ji} \|_F^2 = \sum_{i=1}^n \| (X_i - \sum_{j=1}^{n} X_j z_{ji} ) pp^T \|_F^2 = \sum_{i=1}^n \| (X_i - \sum_{j=1}^{n} X_j z_{ji} ) p \|_2^2 = \sum_{i=1}^n \| X_ip - \sum_{j=1}^{n} X_j p z_{ji} \|_2^2.$
Thus, \cref{eq_obj_pp} can be mathematically simplified as 
\begin{equation}
\label{eq_obj_p}
\begin{aligned}
\min_{p^Tp = 1,Z} & \sum_{i=1}^{n}\Big\| X_ip - \sum_{j=1}^{n} X_j p z_{ji} \Big\|_2^2 \\
& + \lambda \sum_{i=1}^{n}\|X_i - X_i pp^T \|_F^2 + \gamma \|Z\|_F^2.
\end{aligned}
\end{equation}
Usually, it is not enough to seek a single projection vector in real-world applications and multiple projection directions are often needed. 
Major information of the data may exist in several distinct subspaces and recovering multiple subspaces may allow us to better understand the data.
To seek multiple projection directions or feature subspaces, we define a projection matrix $P=[p_1,p_2,\cdots,p_r]\in\mathcal{R}^{b\times r}$ with $p_i$ being a projection direction satisfying that $p_i^Tp_i=1$ and $p_i^Tp_j=0$ for $i\not= j$. 
With $P$, we expand \cref{eq_obj_p} to construct the representation with simultaneous learning of multiple projection directions: 
\begin{equation}
\label{eq_obj_P}
\begin{aligned}
\min_{P^TP = I_r,Z} & \sum_{i=1}^{n}\Big\| X_iP - \sum_{j=1}^{n} X_j P z_{ji} \Big\|_F^2 \\
&+ \lambda \sum_{i=1}^{n}\|X_i - X_i PP^T \|_F^2 + \gamma \|Z\|_F^2,
\end{aligned}
\end{equation}
where $I_r$ is an identity matrix of size $r\times r$. 
It is seen that in \cref{eq_obj_P} the coefficient matrix $Z$ is constructed using the projected features $(X_jP)'s$ or equivalently the projected samples $(X_jPP^T)'s$, which contain the most expressive information in the orthogonal subspaces $(p_jp_j^T)'s$ spanned by projection vectors $p_j's$. 
The projection in our model performs dimension reduction, for which we claim from the following two perspectives:
1) The original examples have size $a\times b$ and the projection reduces the size of examples to $a\times r$.
2) The original examples have $c=\min\{a,b\}$ 2D component features.
With the projection, it is seen that only up to $r$ 2D component features are used in the construction of representation matrix $Z$.
In this paper, we consider the number of 2D features as the dimension, thus the projection actually extracts most expressive 2D features of the data and performs dimension reduction.
For ease of representation, we define 
%
\begin{equation}
\label{eq_JP}
\mathcal{J} = \sum_{i=1}^{n} \Bigg\{\|X_iP - \sum_{j=1}^{n} X_j P z_{ji} \|_F^2 + \lambda \|X_i - X_i PP^T \|_F^2\Bigg\},
\end{equation}
and thus \cref{eq_obj_P} can be written as  
\begin{equation}
\label{eq_obj_linear}
\begin{aligned}
\min_{P^TP = I_r,Z} & \mathcal{J} + \gamma \|Z\|_F^2.
\end{aligned}
\end{equation}

Up to now, model \cref{eq_obj_linear} only considers the linear relationships of projected data in the Euclidean space. 
In real world problems, nonlinear relationships of data often exist and should be counted in data processing. 
To directly take nonlinear relationships of 2D data into consideration, we adopt the kernel approach for 2D data and develop a nonlinear model in remaining of this section. 
Inspired by \cite{nhat2007kernel}, we define nonlinear mappings of the data in the following. 
For a 2D example $M\in\mathcal{R}^{a\times b}$, we define $m_i \in \mathcal{R}^{a \times 1}$ 
to be its column instance vectors, i.e.,
%
\begin{equation}
M = \begin{bmatrix}
m_1 & \cdots & m_b
\end{bmatrix}.
\end{equation}
We define $\phi:\mathcal{R}^{a\times b} \rightarrow \mathcal{R}^{f_a \times b}$ with $f_a \ge a$ being a column-wise nonlinear mapping, such that it maps columns of a matrix to nonlinear space:
\begin{equation}
\label{eq_mapping_phi}
\phi(M) = \begin{bmatrix}
\phi(m_1) & \cdots & \phi(m_b)
\end{bmatrix},
\end{equation}
where $\phi(M) \in \mathcal{R}^{f_a \times b}$ and $\phi(m_i) \in \mathcal{R}^{f_a \times 1}$. 
For two matrices of the same size $U = [u_1,\cdots, u_b] \in\mathcal{R}^{a\times b}, V = [v_1,\cdots, v_b] \in\mathcal{R}^{a\times b}$, 
it is straightforward to obtain the following multiplications by simple algebra:
\begin{equation}
\label{eq_kernel_phi}
\begin{aligned}
\phi^T(U) \phi(V) = 
\begin{bmatrix}
\phi^T(u_1)\phi(v_1) & \cdots & \phi^T(u_1)\phi(v_b)	\\
\vdots  & \ddots & \vdots	\\
\phi^T(u_b)\phi(v_1) & \cdots & \phi^T(u_b)\phi(v_b)	
\end{bmatrix},
\end{aligned}
\end{equation}
where $\phi^T(\cdot)$ denotes $(\phi(\cdot))^T$ for simplicity, and $u_i$, $v_j$ are columns of $U$ and $V$, respectively. 
It is seen that each element of \cref{eq_kernel_phi} is inner product of mapped instance vectors and thus can be calculated by $ k(u_i,v_j) = \phi^T(u_i)\phi(v_j)$, 
where $k:\mathcal{R}^{a} \times \mathcal{R}^{a}\rightarrow \mathcal{R}$ is a kernel function. 
By defining $\mathcal{K}^{\phi}_{ij} = \phi^T(X_i)\phi(X_j) \in\mathcal{R}^{b\times b}$, we can see that $\mathcal{J}$ can be extended to its nonlinear version $\mathcal{J}^{\phi}$ in the kernel space:
\begin{equation}
\label{eq_JP_kernel}
\begin{aligned}
\mathcal{J}^{\phi} = &	\sum_{i=1}^{n} \Bigg\{\|\phi(X_i)P - \sum_{j=1}^{n} \phi(X_j) P z_{ji} \|_F^2 \\
&+ \lambda \|\phi(X_i) - \phi(X_i) PP^T \|_F^2 \Bigg\} \\
=	&	\sum_{i=1}^{n}\textbf{Tr}\Bigg\{ P^T\phi^T(X_i) \phi(X_i)P  \\
& -P^T\sum_{j=1}^{n} \phi^T(X_i) \phi(X_j) P z_{ji} \\
& \quad- \sum_{j=1}^{n} P^T\phi^T(X_j) \phi(X_i) z_{ji} P  \\
		\nonumber\end{aligned}\end{equation}\begin{equation}\begin{aligned}
& + \sum_{s=1}^{n}\sum_{t=1}^{n} P^T\phi^T(X_s)\phi(X_t) P  z_{si} z_{ti} \Bigg\}	\\
&	+ \lambda \sum_{i=1}^{n} \textbf{Tr}\Bigg\{ \phi^T(X_i) \phi(X_i) - \phi^T(X_i) \phi(X_i) PP^T \Bigg\}	\\
=	&	\sum_{i=1}^{n}\textbf{Tr}\Bigg\{ P^T \mathcal{K}^{\phi}_{ii} P -P^T\sum_{j=1}^{n} \mathcal{K}^{\phi}_{ij} P z_{ji} \\
&  - \sum_{j=1}^{n} P^T \mathcal{K}^{\phi}_{ji} z_{ji} P  + \sum_{s=1}^{n}\sum_{t=1}^{n} P^T \mathcal{K}^{\phi}_{st} P  z_{si} z_{ti} \Bigg\} \\
& + \lambda \sum_{i=1}^{n} \textbf{Tr}\Bigg\{ \mathcal{K}^{\phi}_{ii} - \mathcal{K}^{\phi}_{ii} PP^T \Bigg\}.	\\
\end{aligned}
\end{equation}
Therefore, by extending $\mathcal{J}$ to kernel version, we extend \cref{eq_obj_P} to the following nonlinear model, which is named Kernel Two-dimensional Ridge Regression (KTRR):
\begin{equation}
\label{eq_obj_kernel}
\begin{aligned}
\min_{P^TP = I_r, Z} \mathcal{J}^{\phi} + \gamma \|Z\|_F^2.
\end{aligned}
\end{equation}
It is seen that the representation $Z$ is sought with the nonlinear similarity matrices of the examples. 
It is worth pointing out that the integrated projection $P$ extracts spatial information of the data from the right side, i.e., in vertical direction.
It is straightforward to extend the above model by introducing another projection matrix $Q\in\mathcal{R}^{a\times r}$ to project the data from left side, 
such that spatial information from both vertical and horizontal directions can be retained.
However, the current model \cref{eq_obj_kernel} already provides us with the key idea and contribution of the paper, i.e., seeking representation with 2D features in nonlinear space, and extending \cref{eq_obj_kernel} with $Q$ is not within the main scope of this paper. 
Thus, in this paper, we focus on \cref{eq_obj_kernel} and do not fully expand the model to the bi-directional case.
We will discuss the optimization of \cref{eq_obj_kernel} in the following of this section. 


\subsection{Optimization of \cref{eq_obj_kernel} }
\label{sec_optimization}

In the above subsection, we have proposed a new subspace clustering model for 2D data. 
In this subsection, we will develop an alternating minimization algorithm for its optimization. 
Specifically, we alternatively solve the sub-problem associated with a single variable while keeping the others fixed.
We repeat the procedure until convergent.  
It is worth mentioning that the optimization does not rely on ALM type optimization and thus no additional parameters are needed as existing methods usually do. 
We regard this as an advantage because such parameters usually have effects on the solution and it takes efforts to tune such parameters.
The detailed optimization strategy is discussed as follows. 

\subsubsection{$P$-minimization}
The sub-problem associated with $P$-minimization is
\begin{equation}
\label{eq_sub_P}
\begin{aligned}
\min_{P^TP = I_r}	\mathcal{J}^{\phi}.
\end{aligned}
\end{equation}
It is seen that
\begin{equation}
\label{eq_sub_P_rewrite}
\begin{aligned}
\mathcal{J}^{\phi}  = &	\textbf{Tr}\Bigg\{ P^T (\sum_{i=1}^{n} \mathcal{K}^{\phi}_{ii}) P \Bigg\} \\
& - \textbf{Tr}\Bigg\{P^T (\sum_{i=1}^{n}\sum_{j=1}^{n} (\mathcal{K}^{\phi}_{ij} + \mathcal{K}^{\phi}_{ji})z_{ji}) P\Bigg\}	\\
&	+ \textbf{Tr}\Bigg\{ P^T (\sum_{s=1}^{n}\sum_{t=1}^{n}\mathcal{K}^{\phi}_{st}  z_{\bar{s}} z_{\bar{t}}^T) P \Bigg\} \\
& + \lambda \textbf{Tr}\Bigg\{ \sum_{i=1}^{n} \mathcal{K}^{\phi}_{ii} - \sum_{i=1}^{n} \mathcal{K}^{\phi}_{ii} PP^T \Bigg\}	\\
=	&	\textbf{Tr}\Bigg\{ P^T \Big( (1-\lambda)\mathcal{H}^{\phi}_1 
+ \mathcal{H}^{\phi}_2 - \mathcal{H}^{\phi}_3 \Big) P  \Bigg\} + \xi^{\phi},
\end{aligned}
\end{equation}
where we define
\begin{equation}
\label{eq_sub_H_phi}
\begin{aligned}
\mathcal{H}^{\phi}_1 = &	\sum_{i=1}^{n} \mathcal{K}^{\phi}_{ii},\\
\mathcal{H}^{\phi}_2 = &	\sum_{s=1}^{n}\sum_{t=1}^{n}\mathcal{K}^{\phi}_{st}  z_{\bar{s}} z_{\bar{t}}^T
= \sum_{i=1}^{n}\sum_{j=1}^{n}\mathcal{K}^{\phi}_{ij}  z_{\bar{i}} z_{\bar{j}}^T,	\\
\mathcal{H}^{\phi}_3 = &	\sum_{i=1}^{n}\sum_{j=1}^{n} (\mathcal{K}^{\phi}_{ij} + \mathcal{K}^{\phi}_{ji})z_{ji}, \\
\xi^{\phi} = &	\lambda \textbf{Tr} \Big\{ \sum_{i=1}^{n} \mathcal{K}^{\phi}_{ii} \Big\}.
\end{aligned}
\end{equation}
Here, $z_{\bar{s}}$ and $z_{\bar{t}}$ denote the $s$-th and $t$-th rows of matrix $Z$, respectively. 
It is easy to check that the matrices $\mathcal{H}^{\phi}_1$, $\mathcal{H}^{\phi}_2$, and $\mathcal{H}^{\phi}_3$ defined in \cref{eq_sub_H_phi} are real symmetric. 
Hence, $(1-\lambda)\mathcal{H}^{\phi}_1 + \mathcal{H}^{\phi}_2 - \mathcal{H}^{\phi}_3$ is real symmetric and $P$ can be obtained by performing the standard eigenvalue decomposition:
\begin{equation}
\label{eq_sol_P}
P = \textbf{eig}_r \big( (1-\lambda)\mathcal{H}^{\phi}_1 + \mathcal{H}^{\phi}_2 - \mathcal{H}^{\phi}_3 \big),
\end{equation}
where $\textbf{eig}_r(\cdot)$ is an operator that returns eigenvectors of the input matrix that are associated with its $r$ smallest eigenvalues.

\subsubsection{$Z$-minimization}
Fixing $P$, the $Z$-minimization problem is 
\begin{equation}
\label{eq_sub_Z}
\begin{aligned}
\min_{Z} \mathcal{J}^{\phi} + \gamma \|Z\|_F^2.
\end{aligned}
\end{equation}

To simplify the notation of $Z$-minimization, we define an operator $\bar{\phi}(\cdot)$ such that $\bar{\phi}_P(X_i)$ and $\bar{\phi}_P(\bf{X})$ are defined as 
\begin{equation}
\bar{\phi}_P(X_i) = 
\begin{bmatrix}
\phi(X_i)p_1 \\ \vdots \\ \phi(X_i)p_r
\end{bmatrix} 
\in \mathcal{R}^{f_a r \times 1},
\end{equation}
and
\begin{equation}
\bar{\phi}_P(\textbf{X}) = 
\begin{bmatrix}
\bar{\phi}_P(X_1) & \cdots & \bar{\phi}_P(X_n)
\end{bmatrix} 
\in \mathcal{R}^{f_a r \times n}.
\end{equation}
Then it is seen that \cref{eq_sub_Z} can be mathematically derived as 
\begin{equation}
\label{eq_JP_phi_Z}
\begin{aligned}
& \mathcal{J}^{\phi} + \gamma \|Z\|_F^2 \\
= & \sum_{i=1}^{n} \Big\|\phi(X_i)P - \sum_{j=1}^{n} \phi(X_j) P z_{ji} \Big\|_F^2 + \gamma \|Z\|_F^2	\\
=	&	\sum_{i=1}^{n} \Big\|\bar{\phi}_P(X_i) - \sum_{j=1}^{n} \bar{\phi}_P(X_j) z_{ji} \Big\|_F^2	 + \gamma \|Z\|_F^2 \\
=	& \Big\| \bar{\phi}_P(\textbf{X}) - \bar{\phi}_P(\textbf{X}) Z \Big\|_F^2 + \gamma \|Z\|_F^2.
\end{aligned}
\end{equation}
It is seen that the $Z$-subproblem is quadratic and convex, which admits closed-form solution with its first-order optimality condition. 
Hence, $Z$ is solved by
\begin{equation}
\label{eq_sol_Z}
Z 	=	\Bigg( \bar{\phi}_P^T(\textbf{X}) \bar{\phi}_P(\textbf{X}) 
+ \gamma I_n \Bigg)^{-1} 
\Bigg( \bar{\phi}_P^T(\textbf{X}) \bar{\phi}_P(\textbf{X}) 
\Bigg).
\end{equation}

To explicitly expand \cref{eq_sol_Z} and give the precise solution of $Z$, we define the matrix $\bar{\mathcal{K}}^{\phi}\in\mathcal{R}^{n\times n}$ as follows:
\begin{equation}
\label{eq_bK_phi}
\begin{aligned}
\bar{\mathcal{K}}^{\phi} = &	\bar{\phi}_P^T (\textbf{X})\bar{\phi}_P(\textbf{X})	\\
=	&	\begin{bmatrix}
\bar{\phi}_P^T (X_1) \\ \vdots \\ \bar{\phi}_P^T (X_n)
\end{bmatrix}
\begin{bmatrix}
\bar{\phi}_P(X_1) & \cdots & \bar{\phi}_P(X_n)
\end{bmatrix}	
\\ =	&	\begin{bmatrix}
\sum_{s=1}^{r} p_s^T \mathcal{K}^{\phi}_{11} p_s & \cdots & \sum_{s=1}^{r} p_s^T \mathcal{K}^{\phi}_{1n} p_s	\\
\vdots  & \ddots & \vdots	\\
\sum_{s=1}^{r} p_s^T \mathcal{K}^{\phi}_{n1} p_s & \cdots & \sum_{s=1}^{r} p_s^T \mathcal{K}^{\phi}_{nn} p_s	
\end{bmatrix}	\\		\nonumber\end{aligned}\end{equation}\begin{equation}\begin{aligned}
=	&	\begin{bmatrix}
\textbf{Tr}(P^T\mathcal{K}^{\phi}_{11}P) & \cdots & \textbf{Tr}(P^T\mathcal{K}^{\phi}_{1n}P)	\\
\vdots  & \ddots & \vdots	\\
\textbf{Tr}(P^T\mathcal{K}^{\phi}_{n1}P) & \cdots & \textbf{Tr}(P^T\mathcal{K}^{\phi}_{nn}P)	
\end{bmatrix}.
\end{aligned}
\end{equation}
Incorporating \cref{eq_bK_phi} into \cref{eq_sol_Z}, we obtain the solution of $Z$ with explicit expression
\begin{equation}
\label{eq_sol_Z_K}
\begin{aligned}
Z = ( \bar{\mathcal{K}}^{\phi} + \gamma I_n )^{-1}(\bar{\mathcal{K}}^{\phi}).
\end{aligned}
\end{equation}
To be clearer, we summarize the optimization steps in \algref{alg_optimization}.
Regarding the optimization of KTRR, we have the following theorem to guarantee the convergence.
\begin{theorem}
	Denote the objective function of \cref{eq_obj_kernel} as $g(P,Z)$, then its value sequence $\{ g(P^t,Z^t) \}_{t=1}^{\infty}$ is decreasing under the update rules of \cref{eq_sol_P,eq_sol_Z_K} and converges.
\end{theorem}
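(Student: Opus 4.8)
The plan is to recognize this as a standard block coordinate descent (alternating minimization) convergence argument, and to establish the two ingredients that together force convergence of the value sequence: (i) monotone non-increase of the objective along the iterates, and (ii) a uniform lower bound on the objective. Once both hold, $\{g(P^t,Z^t)\}$ is a monotone bounded real sequence and converges by the monotone convergence theorem, so only convergence of the \emph{value} sequence (not of the iterates themselves) need be shown.

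For the monotone decrease, I would argue that each of the two updates is an \emph{exact} minimizer of its subproblem with the other block held fixed. The $Z$-update is the easy direction: with $P$ fixed, \cref{eq_JP_phi_Z} exhibits the objective as a convex quadratic in $Z$ (strictly convex when $\gamma>0$), so its first-order optimality condition yields the unique global solution \cref{eq_sol_Z_K}, giving $g(P^{t},Z^{t+1}) \le g(P^{t},Z^{t})$. The $P$-update is the more delicate direction: with $Z$ fixed, \cref{eq_sub_P_rewrite} reduces the problem to $\min_{P^TP=I_r} \textbf{Tr}\{P^T A P\} + \xi^{\phi}$, where $A = (1-\lambda)\mathcal{H}^{\phi}_1 + \mathcal{H}^{\phi}_2 - \mathcal{H}^{\phi}_3$ is real symmetric and $\xi^{\phi}$ is constant in $P$. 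By the Ky Fan (Rayleigh--Ritz) trace-minimization theorem, the minimum over the Stiefel manifold equals the sum of the $r$ smallest eigenvalues of $A$ and is attained by the matrix of associated eigenvectors, which is exactly \cref{eq_sol_P}; hence $g(P^{t+1},Z^{t}) \le g(P^{t},Z^{t})$. Chaining the two inequalities gives $g(P^{t+1},Z^{t+1}) \le g(P^{t},Z^{t})$ for every $t$, so the value sequence is non-increasing.

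For the lower bound, I would return to the original form \cref{eq_obj_kernel}: the objective is a sum of squared Frobenius norms weighted by the nonnegative coefficients $1$, $\lambda \ge 0$, and $\gamma \ge 0$, hence $g(P,Z) \ge 0$ for all feasible $(P,Z)$. Being non-increasing and bounded below by $0$, the sequence $\{g(P^t,Z^t)\}$ converges, which is the claim.

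The main obstacle I anticipate is the $P$-step. Because the constraint set $\{P : P^TP = I_r\}$ is non-convex, one cannot certify global optimality by convexity as in the $Z$-step, so the crux is to invoke the Ky Fan characterization carefully and confirm that the eigenvector solution \cref{eq_sol_P} is a \emph{global} minimizer of the trace objective over the manifold, rather than merely a stationary point; only then does the step genuinely fail to increase $g$. A secondary point worth stating explicitly is that since the $\lambda$ term contributes the factor $(1-\lambda)$ to $A$, the matrix $A$ need not be positive semidefinite, yet this is harmless here: the lower bound is read off from the original squared-norm form rather than from $A$, and the Ky Fan argument for the $P$-step requires only that $A$ be symmetric.
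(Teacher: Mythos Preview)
Your approach is correct and mirrors the paper's: monotone non-increase from exact alternating minimization together with nonnegativity of the objective; your explicit appeal to the Ky Fan trace theorem for the $P$-step and to strict convexity for the $Z$-step supplies the justification that the paper simply asserts. One indexing slip to repair: as written, your two inequalities $g(P^{t},Z^{t+1})\le g(P^{t},Z^{t})$ and $g(P^{t+1},Z^{t})\le g(P^{t},Z^{t})$ do not chain; since the algorithm updates $P$ before $Z$, the $Z$-step should read $g(P^{t+1},Z^{t+1})\le g(P^{t+1},Z^{t})$, and concatenation then gives $g(P^{t+1},Z^{t+1})\le g(P^{t+1},Z^{t})\le g(P^{t},Z^{t})$.
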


\begin{proof}
	According to the optimization of $P$ and $Z$, it is easy to see that
	\begin{equation}
	g(P^{t},Z^{t}) \le g(P^{t+1},Z^{t}) \le
	g(P^{t+1},Z^{t+1}),
	\end{equation}
	hence \cref{eq_obj_kernel}, i.e., the value sequence $\{ g(P^t,Z^t) \}$ is decreasing under the update rules of \cref{eq_sol_P,eq_sol_Z_K}. Moreover, it is straightforward to verify the nonnegativity of $\{ g(P^t,Z^t) \}$ by the definition of $g(P,Z)$ in \cref{eq_obj_kernel}, hence $\{ g(P^t,Z^t) \}$ is bounded and thus converges.
\end{proof}

\begin{rem}
	We analyze the time complexity of KTRR as follows.
	To compute the kernel matrices $\mathcal{K}^{\phi}_{ij}$, we need $O(ab^2)$ for each and thus $O(n^2ab^2)$ complexity for all.
	At each step, the complexity comes from the calculation of $P^t$ and $Z^t$. 
	According to \cref{eq_sub_H_phi,eq_sol_P}, it takes $O(n^3 + n^2 b^2 + b^2 r)$ operations to solve $P$-subproblem per iteration.
	For $Z^t$-updating, it takes $O(n^2 br)$ operations to obtain $\bar{\mathcal{K}}^{\phi}$ in \cref{eq_bK_phi} and $O(b^3)$ operations to solve \cref{eq_sol_Z_K}.
	Thus, the overall complexity per each iteration of KTRR is $O(n^3 + n^2 b^2 + b^2 r + n^2 br + b^3) = O(n^3+n^2b^2+b^3)$. 
	
\end{rem}

{ \scriptsize 
	\begin{algorithm}[!tb]
		\algsetup{linenosize=\small }  \small
		\caption{ Solving \cref{eq_obj_kernel}: Kernel Two-dimensional Ridge Regression (KTRR) } 
		\vspace{1mm}
		\begin{algorithmic}[1] 
			\STATE \textbf{Input}: $\textbf{X}$, $\lambda$, $\gamma$, $\epsilon$ (convergent tolerance), $t_{max}$
			\STATE \textbf{Initialize:} $Z^0$, $P^0$, $t=0$.
			\STATE Construct kernel matrices $\mathcal{K}^{\phi}_{ij}$. 
			\REPEAT
			\STATE Update $P^t$ by \cref{eq_sol_P}.
			\STATE Update $Z^t$ by \cref{eq_sol_Z}.
			\STATE $t=t+1$.
			\UNTIL $t\geq t_{max}$ or $\{\mathcal{J}^{\phi}(P^t,Z^t)\}$ converges
			\STATE \textbf{Output}: $Z$, $P$
			\vspace{1mm}
		\end{algorithmic} 
		\label{alg_optimization}
	\end{algorithm}
}

\subsection{ Subspace Clustering Algorithm via KTRR }
After we obtain the representation matrix $Z$ by solving \cref{eq_obj_kernel}, we construct an affinity matrix $\textbf{A}$ in a post-processing step as commonly done for many spectral clustering-based subspace clustering methods \cite{peng2015subspace,liu2013robust}. 
Following \cite{peng2015subspace,liu2013robust}, we construct $\textbf{A}$ with the following steps:
\begin{itemize}
	\item[1)] Let $Z = U\Sigma V^{T}$ be the skinny SVD of $Z$. Define $\bar{Z} = U\Sigma^{1/2}$ to be the weighted column space of $Z$.
	\item[2)] Obtain $\bar{U}$ by normalizing each row of $\bar{Z}$.
	\item[3)] Construct the affinity matrix $\textbf{A}$ as $[\textbf{A}]_{ij}=\left(|[\bar{U}\bar{U}^{T}]_{ij}|\right)^{\phi}$,
	where $\phi\ge 1$ controls the sharpness of the affinity matrix between two data points\footnote{In this paper, we follow \cite{liu2010robust} and set $\phi=4$ for fair comparison. }.
\end{itemize}
Subsequently, we perform Normalized Cut (NCut) \cite{shi2000normalized} on $\textbf{A}$ in a way similar to \cite{agarwal2004k,peng2015subspace}. We will present the detailed experimental results in the following section.

\section{Experiment}
\label{sec_experiment}
In this section, we conduct extensive experiments to verify the effectiveness of the proposed method. 
In particular, we compare our method with several state-of-the-art subspace clustering algorithms, 
including LRR \cite{liu2013robust}, LapLRR \cite{liu2014enhancing}, SCLA \cite{peng2015subspace}, SSC \cite{elhamifar2013sparse}, S$^{3}$C \cite{li2015structured}, TLRR \cite{Zhou2019Tensor}, SSRSC \cite{Xu2019Scaled}, and DSCN \cite{pan2017deep}.
Seven data sets are used in our experiments, including Jaffe \cite{lyons1998japanese}, PIX \cite{hond1997distinctive}, Yale \cite{belhumeur1997eigenfaces},
Opticalpen, Alphadigit, ORL \cite{samaria1994parameterisation}, and PIE.
Three evaluation metrics are adopted in the experiments, including clustering accuracy, normalized mutual information (NMI), and purity, 
whose detailed information can be found in \cite{peng2018integrate,peng2017integrating}.
In rest of this section, we will introduce the subspace clustering methods, benchmark data sets, and detailed clustering performance and analysis, respectively.  
For purpose of re-productivity, we will provide our code at xxx (available after acceptance).

\subsection{Dataset}
For the data sets used in our experiments, we show some examples in \cref{fig_examples}. We briefly describe these data sets as follows:
\textbf{1) Yale}. It contains 165 gray scale images of 15 persons with 11 images of size 32$\times$32 per person. 
\textbf{2) JAFFE}. 10 Japanese female models posed 7 facial expressions and 213 images were collected. Each image has been rated on 6 motion adjectives by 60 Japanese subjects. 
\textbf{3) PIX}. 100 gray scale images of $100\times100$ pixels from 10 objects were collected.
\textbf{4) Alphadigit} data set is a binary data set, which collects handwritten digits 0-9 and letters A-Z. Totally, there are 36 classes and 39 samples for each class, of which each example has size of 20$\times$16 pixels. 
\textbf{5) Opticalpen} collects hand-written pen digits of 0-9. Totally, there are 1797 images of size 8$\times$8 in this data set. 
\textbf{6) ORL} contains face images of size $32\times 32$ pixels from 40 individuals. Each individual has 10 images taken at different times, with varying facial expressions, facial details, and lighting conditions.
\textbf{7) PIE} has face images of 68 persons with different poses, illumination conditions, and expressions. For each person, we select the first 5 images. All images are resized to $32\times 32$ pixels.

\begin{figure*}[!]
	\centering
	{\includegraphics[width=2\columnwidth]{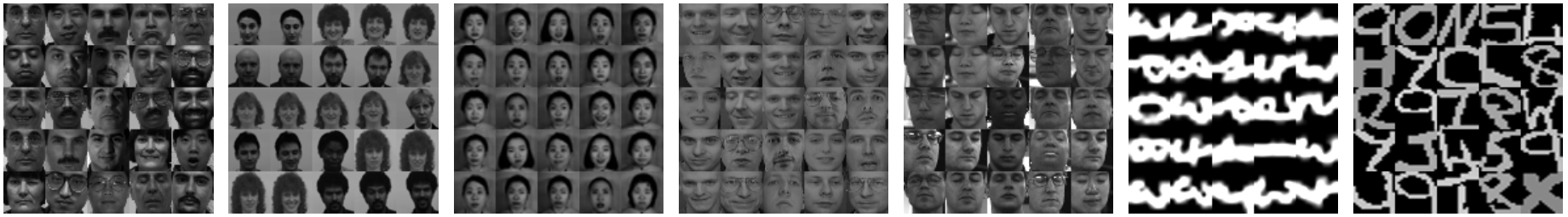}}
	\caption{ Examples of data sets used in our experiments. From left to right are example from Yale, PIX, Jaffe, ORL, PIE, Opticalpen, and Alphadigit data sets, respectively. }
	\label{fig_examples}
\end{figure*}

\subsection{Methods in Comparison}
\label{sec_methods}
To evaluate the performance of our method, we compare it with several state-of-the-art subspace clustering methods. For the baseline methods and KTRR, we briefly describe them as follows:
\begin{itemize}
	\item LRR seeks a low-rank representation of the data by minimizing the nuclear norm of the representation matrix. For its balancing parameter, we vary it within the set of $\{0.001,0.01,0.1,1,10,100,1000\}$;
	\item LapLRR is a nonlinear extension of the LRR, which exploits nonlinear relationships of the data on manifold \cite{Yin2016Laplacian}. 
	We follow \cite{peng2018integrate} and keep 5 neighbors on the graph, where binary and radial basis function (RBF) kernel with radial varying in $\{0.001, 0.01, 0.1, 1, 10, 100, 1000\}$ are used, respectively; 
	\item SCLA is a non-convex variant of the LRR, which seeks a low-rank representation of the data by minimizing the non-convex log-determinant rank approximation of the representation matrix.
	For its balancing parameters that controls the sparsity of noise and low-rankness of the representation, we vary them within the set of $\{0.001,0.01,0.1,1,10,100,1000\}$; 
	\item SSC seeks a sparse representation of the data by minimizing the $\ell_1$ norm of the representation matrix. We tune the regularization parameters within the set of $\{0.001,0.01,0.1,1,10,100,1000\}$;
	\item S$^3$C is an extension of SSC, which seeks sparse representation of the data in latent space. Moreover, S$^3$C improves the clustering capability by considering nonlinear relationships of the data. For its balancing parameter , we set it within the set $\{0.001,0.01,0.1,1,10,100,1000\}$. For its parameter that balances the sparsity and nonlinear structure of the representation, we set it within the set $\{0.1,0.15,0.2,0.25\}$;
	\item TLRR seeks a low-rank representation of the tensor-type data, where it recovers a clean low-rank tensor while infering the cluster structur of the data. 
	For its balancing parameter, we vary it within the set of \{0.001,0.01,0.1,1,10,100,1000\};
	\item SSRSC recovers physically meaningful and more discriminative coefficient matrix by restricting the non-negativity of coefficients and constraining sum of the coefficient vectors up to a scalar less than 1. For its parameters including the sum of coefficient vectors, the penalty parameter of ADMM framework, and the iteration number, we follow the original paper and set them to be 0.5, 0.5, and 5, respectively. For its balancing parameter, we vary it within the set of \{0.001,0.01,0.1,1,10,100,1000\}.
	\item DSCN \cite{pan2017deep} constructs a representation matrix with deep neural networks, 
	where it maps given samples using explicit hierarchical transformations and simultaneously learns the reconstruction coefficients. 
	For the network, we conduct experiments with different kernel sizes and three-layer network depths. 
	The kernel size and network depth are chosen within the sets of \{[3,3,3], [5,5,3]\} and \{[10,20,30], [10,20,40], [20,30,40]\}, respectively.
	\item KTRR seeks the least square representation of the data with 2D features. Both RBF and polynomial kernels are used, where we set the radial and power parameters within the sets of $\{0.001, 0.01, 0.1, 1, 10, 100, 1000\}$ and $\{1,2,$ $3,4,5,8,10\}$, respectively. 
	We set the number of projections and vary other balancing parameters within the sets of $\{1,3,5,7,9\}$ and $\{0.001,$ $ 0.01, 0.1, 1,$ $ 10, 100, 1000\}$, respectively.
	
\end{itemize}

\subsection{Comparison of Clustering Performance}
\label{sec_exp_performance}
In this section, we present the detailed comparison of KTRR and baseline methods.
To provide more comprehensive evaluation of KTRR, we consdier conducting experiments in a way similar to \cite{peng2015subspace,peng2017nonnegative,peng2020nonnegative}.
Specifically, the experimental setting is as follows.
For each data set, we conduct experiments using its subsets with different number of clusters.
In particular, for a data set with a total number of $\bar{N}$ clusters, we consider its subsets with $N$ clusters, where $N$ may range in a set of values. 
For example, in ORL data, $\bar{N}=40$ and we consider its subsets with 5, 10, 15, 20, 25, 30, 35, and 40 clusters, respectively.
It is clear that there are $\frac{\bar{N}!}{(\bar{N}-N)!N!}$ possible subsets for a specific $N$ value and we randomly choose 10 of them in the experiment.
We report the results in \cref{tab_per_jaffe,tab_per_pix,tab_per_opticalpen,tab_per_alphadigit,tab_per_yale,tab_per_orl,tab_per_pie}, where average performance over the 10 subsets is reported with respect to each $N$ value.

\begin{table*}[htbp] 
\centering
\caption{Clustering Performance on Jaffe }
\resizebox{0.9\textwidth}{0.2\textwidth}{

	}
	\label{tab_per_pie}
\end{table*}

Generally, we observe that KTRR achieves the leading performance among all methods. 
Particularly, we have the following observations:
1) KTRR has the best performance in all cases on Jaffe data set.
2) KTRR has the best performance in almost all cases on PIX, Yale, and PIE data sets. 
3) KTRR is the best on Alphadigit data set, where it obtains the top two performance in almost all cases among which more than half are the best. 
4) KTRR is the second best method on Opticalpen and ORL data sets with quite competitive performance.
For each observation, we provide detailed discussion and analysis in the following. 

\subsubsection{Observation 1)}
It is seen that KTRR can cluster Jaffe data set correctly in all cases, whereas the baseline methods cannot.
Besides KTRR, SSRSC has the best performance, where it achieves the top two performances in all cases. 
LRR, LapLRR, SCLA, and SSC are also very competitive on this data set with the averaged performance higher than 99\%.
However, these methods are less competitive on other data sets, which will be discussed in later sections.
It should be noted that although some methods show promising performance, KTRR is the only method that achieves 100\% accuracy in all cases.

\subsubsection{Observation 2)}
It is seen that KTRR achieves the top performances in 32 out of 36, 17 out of 21, and 24 out of 27 cases on Yale, PIE, and PIX data sets, respectively.
Moreover, KTRR also achieves the top second performances on these data sets.
For example, KTRR obtains the top second performances in 2, 2, and 3 cases on Yale, PIE, and PIX data sets, respectively,
which indicates that KTRR has the top two performances in 34 out of 36, 19 out of 21, and 27 out of 27 cases on these data sets, respectively. 
On these data sets, the most competing methods include DSCN, SSRSC, LRR, LapLRR, and SCLA.
Compared with these methods, KTRR improves averaged clustering accuracy, NMI, and purity by at least 7\%, 6\%, and 6\% on Yale data set. 
The improvement can be even more significant if we compare KTRR with each baseline method, respectively.
For example, we can see that KTRR improves the averaged NMI by about 10\% compared with SSRSC and DSCN.
On Yale data set, it is seen that LRR, LapLRR, and SCLA are comparable to each other and they obtain the top two best performances in 11, 10, and 9 out of 36 cases, respectively.
However, such kind of performances is still significantly inferior to KTRR.

On PIX and PIE data sets, the most competing methods include DSCN and S$^3$C. 
Similar observations to Yale data set can be found.
That is, the most competing methods show better performances to the other baseline ones, but inferior to KTRR. 
Moreover, methods such as SSRSC and S$^3$C do not always show competing performance on all these data sets, whereas KTRR is consistently the best. 
These observations indicate the superior performance of KTRR.

\subsubsection{Observation 3)}
On Alphadigit data set, KTRR achieves the highest, the top second, and the top third performances in 13 and 8, and 3 out of 24 cases, respectively.
It is seen that KTRR obtains more than half of the best and almost all of the top two performances on this data set. 
Among the baseline methods, DSCN, LRR, and SCLA achieve 6, 4, and 1 the top performances, respectively. 
Moreover, DSCN achieves the top second performances in 4 cases. 
Generally, DSCN is the second best method on Alphadigit data set, but its performance is less promising than KTRR. 
In general, we may conclude that KTRR outperforms DSCN, as well as the other methods on Alphadigit data set.

\subsubsection{Observation 4)}
On ORL data set, DSCN and KTRR are the most competitive methods.
It is seen that DSCN obtains the top two performances in 18 out of 24 cases, among which 15 are the best and 3 are the top second, respectively. 
KTRR obtains the top two performances in all cases, including 6 cases with the best performances. 
Moreover, in the average cases, DSCN outperforms KTRR in accuracy and purity by about 1-2\% whereas KTRR outperforms DSCN in NMI by about 4\%. 
Among the other methods, SCLA obtains the top two performances in 7 cases, which is observed to be the best.
These observations indicate that KTRR is competitive to DSCN while superior to the other baseline methods on ORL data set. 

On Opticalpen data set, SSRSC, LRR, and KTRR are the most competitive methods, among which SSRSC is the best.
It is observed that SSRSC achieves the best performances in 17 out of 27 cases, which suggests its superior performance to the other methods on Opticalpen data set.
Among the other methods, KTRR is the best, where it obtains the best and the top second performances in 3 and 13 cases, respectively.
Overall, SSRSC and KTRR achieve the top two performances in 18 and 16 cases, respectively.
Moreover, LRR has the top second performances in 8 cases but no best ones, showing inferior performance to KTRR.
These observations indicate that though KTRR is not the best on Opticalpen data set, it is rather competitive to SSRSC and superior to the other methods.

\subsubsection{Discussion}
It is observed although KTRR outperforms the other methods on Alphadigit data set, 
the improvements are relatively less significant than on other data sets such as Yale. 
Moreover, although KTRR has the best performances in several cases on Opticalpen data set, generally it is inferior to SSRSC on this data set.
One reasonable explanation is as follows. Alphadigit and Opticalpen data sets are pendigit images while the others are face images.
It is observed that pendigit images contain less structural information than face images.
Thus it is relatively more difficult to extract rich and useful structural information with the projection when constructing the representation matrix. 
However, KTRR still outperforms or is comparable to the baseline methods on these data sets.
In general, all algorithms have relatively better performance on ``easy" data sets such as Jaffe and PIX than the ``hard" ones such as ORL and Alphadigit.
The reason is that Jaffe and PIX data sets have less variations while the other data sets are more complicated.
For example, face images in PIE data sets may have different angle, facial expression, lighting conditions, and wearings;
some images in Alphadigit data set have similar shapes but belong to different categories, such as digit ``0" and letter ``O".
These properties of the data sets make the corresponding classification task more challenging.

In general, we can see that the baseline methods may obtain the best performances on some data sets, 
but they do not consistently show superior performance to KTRR on other data sets. 
For example, DSCN is the best method on ORL data set.
However, KTRR outperforms DSCN on the other data sets. 
These observations suggest effectiveness and superior clustering performance of the KTRR to the baseline methods.
In the following subsections, we will further evaluate KTRR with some more detailed tests.

\begin{figure}[!t]
	\centering
	{\includegraphics[width=1\columnwidth]{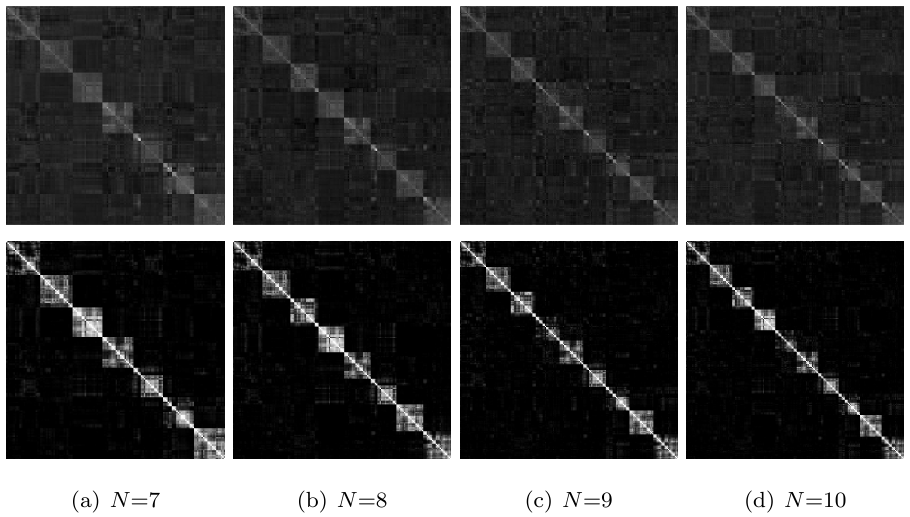}}
	\caption{ Example of learned representation matrix $Z$ (on the top) and the constructed affinity matrix $\textbf{A}$ (on the bottom) on Jaffe data. }
	\label{fig_Z}
\end{figure}

\subsection{Learned Representation}
In the above test, we have conducted extensive experiments to evaluate the the clustering performance of all methods, which has confirmed the effectiveness of KTRR. 
To better understand the clustering behavior of the KTRR, in this test, we visually show some examples of the learned representation matrix $Z$ as well as the constructed affinity matrix $\textbf{A}$ in the post-processing.
Without loss of generality, we show the matrices on Jaffe data, where we consider the cases of $N=7, 8, 9,$ and 10, respectively.
We visually show these matrices in \cref{fig_Z}.
It is seen that the learned representation matrices have clear block-diagonal structure, which clearly shows group information of the data.
The post-processing step makes the structured representation sharped, leading to even stronger structural effects. 
Hence, the proposed method performs clustering effectively with such representation matrices.

\subsection{Convergence Study}
\label{sec_exp_conv}
In \cref{sec_optimization}, we have theoretically analyzed the convergence of objective value. 
To better understand the convergence behavior of the proposed algorithm, we empirically show some results of convergence.
In this test, we use Jaffe and Alphadigit data sets for illustration.
To empirically testify the convergence of KTRR in objective value, without loss of generality, we fix $r=5, \lambda = 0.1, \gamma = 0.1$ and iterate the algorithm 50 times.
We plot the objective values in \cref{fig_conv_f}.  
It is observed that the proposed algorithm converges in objective value within a few number of iterations. 

\begin{figure}[!t]
	\centering
	{\includegraphics[width=1\columnwidth]{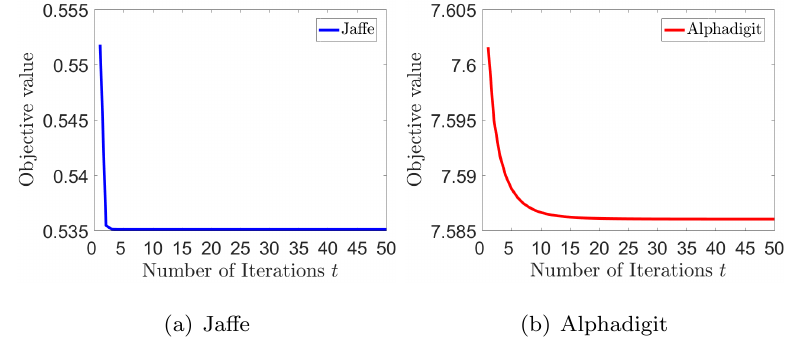}}
	\caption{ Examples of convergence curves of the objective value on Jaffe and Alphadigit data sets. Linear kernel is used and the other parameters are fixed as $r = 5$, $\lambda = 0.1$, and $\gamma = 0.1$. }
	\label{fig_conv_f}
\end{figure}

Moreover, since it is difficult to provide theoretical results on the convergence of variables, in this test we show some experimental results to verify this. 
To show the convergence of $\{Z_t\}$ and $\{P_t\}$, 
we show the plots of sequences $\{\|Z_{t+1}-Z_{t}\|_F\}_{t=0}^{\infty}$ and $\{\|P_{t+1}-P_{t}\|_F\}_{t=0}^{\infty}$, 
i.e., the difference of consecutive updates of variables.
We remain the above settings and show the results in \cref{fig_conv_zp}.
It is observed that the proposed algorithm converges within a few number of iterations in both $\{P_t\}$ and $\{Z_t\}$, which implies fast convergence of the proposed method in variable sequence. 
Similar convergence pattern can be observed on other data sets with various parameters. 
These observations suggest fast convergence and efficiency of KTRR and its potential applicability in real-world applications.

\begin{figure}[!h]
	\centering
	{\includegraphics[width=1\columnwidth]{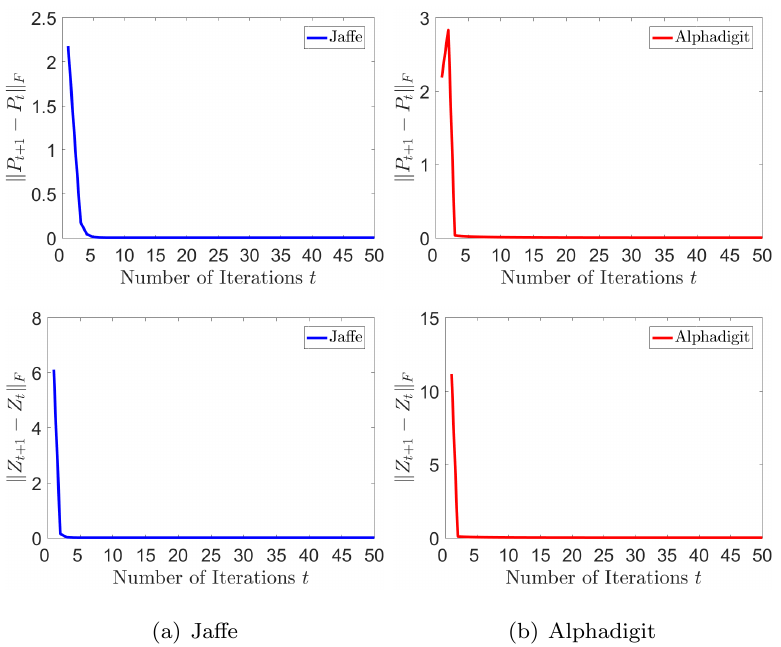}}
	\caption{ Examples of convergence curves of the variables on Jaffe and Alphadigit data sets. Linear kernel is used and the other parameters are fixed as $r = 5$, $\lambda = 0.1$, and $\gamma = 0.1$. }
	\label{fig_conv_zp}
\end{figure}

\subsection{Feature Extraction and Data Reconstruction}
In this subsection, we show some results on how the sought projection matrix works.
We use Yale data and adopt the linear kernel for illustration.
Without loss of generality, we fix $r = 30$, $\lambda = 0.01$, and $\gamma = 0.01$ and obtain the projection matrix $P$.
We show the extracted features and reconstructed examples by $P$ in \cref{fig_reconstruction}. 
It is seen that the key features of the examples can be captured with a few number of projection directions. 
These key features well reconstruct the original example, suggesting the effectiveness of the proposed method in feature extraction.

\begin{figure}[!t]
	\centering
	\subfigure[Example 1]{\includegraphics[width=1\columnwidth]{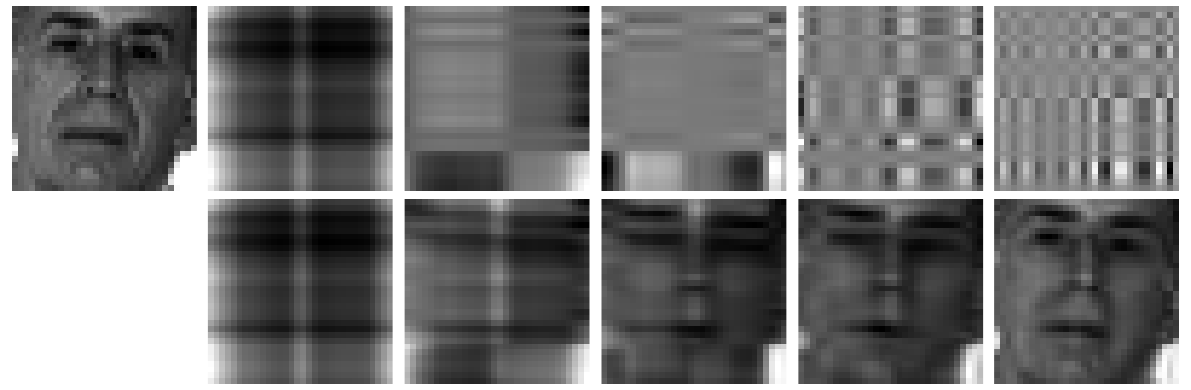}}
	\subfigure[Example 2]{\includegraphics[width=1\columnwidth]{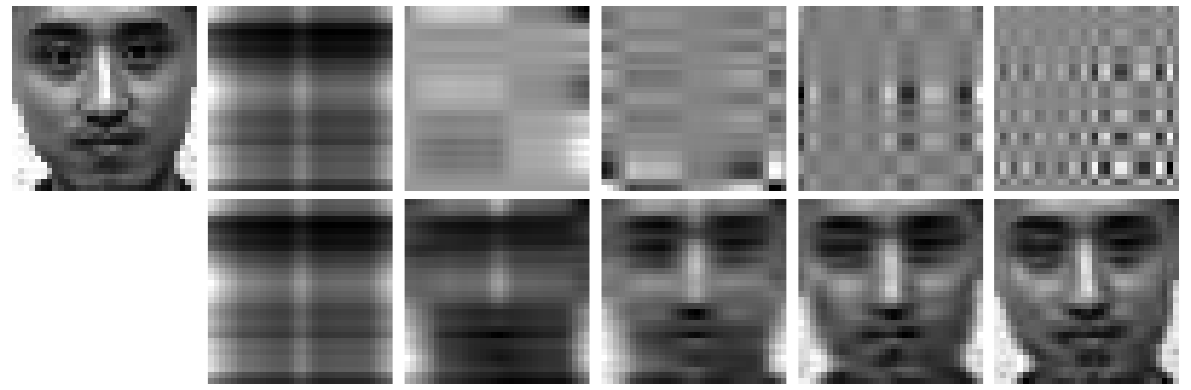}}
	\caption{ Examples of reconstructed examples on Yale data set. In each panel, the top left is the original sample image. For the rest, the top are the extracted $j$-th feature $Xp_jp_j^T$ while the bottom are the reconstructed image using the top $j$ features $\sum_{s=1}^{j}Xp_jp_j^T$. From left to right, $j=$ 1, 3, 5, 9, and 15, respectively. Linear kernel is used for reconstruction and the other parameters are fixed to be $\lambda = 1$, $\gamma = 0.01$, and $r=15$. }
	\label{fig_reconstruction}
\end{figure}

\begin{figure*}[!]
	\centering
	{\includegraphics[width=1.3\columnwidth]{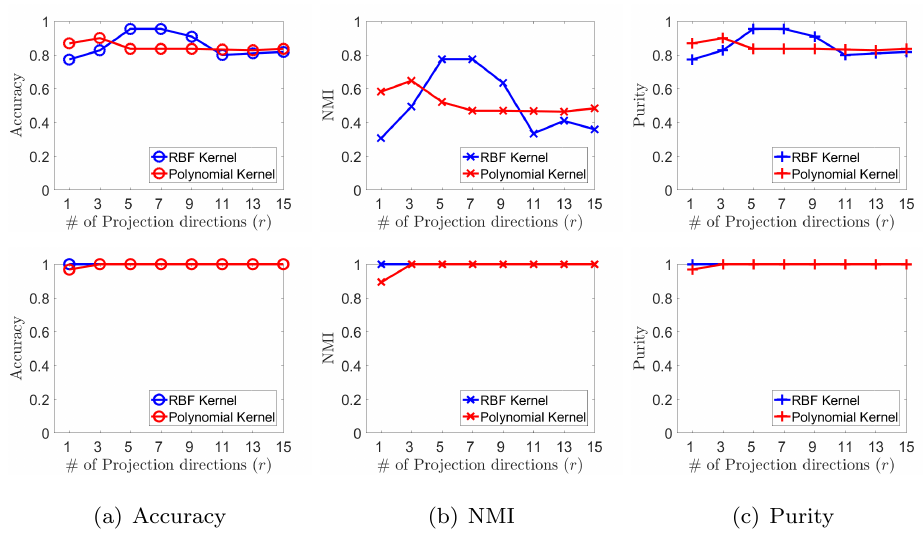}}
	\caption{ Examples of how the number of projections affects the performance of KTRR in accuracy, NMI, and purity on Yale (on the top) and Jaffe (on the bottom) data sets. For a specific $r$ value, we report the best performance by tuning all the other parameters in a grid-search scheme as in \cref{sec_exp_performance}.   }
	\label{fig_r}
\end{figure*}

To further test how the projection works, we investigate how the clustering performance of KTRR changes with respect to $r$ value.
Without loss of generality, we use Yale and PIX data sets for illustration.
For each data set, we consider two types of kernels with the same parameter settings as in previous test.
For each kernel, we vary $r\in\{1,3,5,7,9,11,13,15\}$.
For a fixed $r$ value we vary all the other parameters within the set $\{0.001,0.01,0.1,1,10,100,1000\}$, and we record the highest performance and report them in \cref{fig_r}.
It is seen that for each metric, two curves can be obtained corresponding to RBF and polynomial kernels, respectively. 
For both kernels, the performance of KTRR reaches the best performance with small $r$ in all metrics.
With large $r$ values, the performance of our method is not further improved, implying that a few number of projection directions can sufficiently extract key features of the data and lead to promising clustering performance. 
Thus, our method can also be applided as a powerful dimension reduction technique for 2D data.

\subsection{KTRR v.s. TRR}
In this test, we conduct some experiments on Yale and PIE data sets to verify the importance of learning nonlinear structures of data.
For Yale and PIE data sets, we use the same subsets as in \cref{sec_exp_performance}.
To show the importance of learning nonlinear structures with kernels, we compare the performances of KTRR with general kernels and linear kernel as two cases.
For the linear case, we use a linear kernel for KTRR and denote it as TRR.
For the other parameters, we tune them in the same way as in \cref{fig_ktrr_trr}.
For KTRR, we use general kernels as described in \cref{sec_methods} and tune the other parameters in the same way as TRR.
We report the best performances of KTRR as well as TRR with respect to the number of clusters in \cref{fig_ktrr_trr}.
It is seen that generally KTRR with general kernels outperforms TRR with linear kernel with significant improvements in many cases.
In fact, it is natural that KTRR can always perform no worse than TRR, because TRR is a special case of KTRR by using linear kernel and this ensures that KTRR has at least the same performance as TRR.
Generally speaking, we can observe much better performance if general kernels are used because they correspond to some more complicated nonlinear mappings, which may better capture nonlinear structures of the data than linear mapping.

\begin{figure*}[!]	
	\centering
	{\includegraphics[width=1.3\columnwidth]{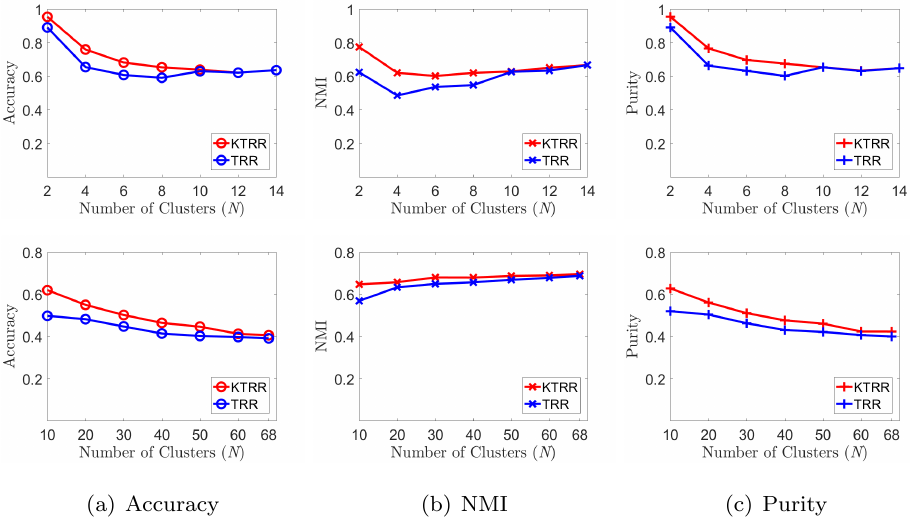}}
	\caption{  Examples of how the proposed method performs with general kernels (denoted as KTRR) and linear kernel (denoted as TRR) in accuracy, NMI, and purity on Yale (on the top) and PIE (on the bottom) data sets. 
		For a specific $N$ value, we report the best performance of KTRR and TRR by tuning all the other parameters in a grid-search scheme as in \cref{sec_exp_performance}.
	}
	\label{fig_ktrr_trr}
\end{figure*}

\section{Conclusions}
\label{sec_conclusion}
In this paper, we propose a novel subspace clustering method named KTRR for 2D data.
The KTRR provides us with a way, which is different from tensor methods, to learn the most representative 2D features from 2D data in learning data representation.
The KTRR performs 2D feature learning and low-dimensional representation construction simultaneously, which renders the two tasks to mutually enhance each other.
2D kernel renders the KTRR to have enhanced capability of capturing nonlinear relationships from data.
An efficient algorithm is developed for its optimization with provable decreasing and convergent property in objective value. 
Extensive experimental results confirm the effectiveness and efficiency of our method.

Besides the strengths of the KTRR, we should also note its weakness and possible further research directions, which are summarized as follows.
1) The KTRR captures spatial information from horizontal direction by multiplying a single projection matrix $P$ on right hand side,
which omits spatial information from vertical direction.
Thus, it is interesting to introduce another projection $Q$ on left hand side of the data examples for both horizontal and vertical spatial information extraction.
2) In KTRR, we need to provide a value for $r$, which determines the number of projection directions to seek.
After extending the KTRR to the bi-directional case, we need to provide the number of projections to seek from both sides.
It is interesting to develop the KTRR such that it can automatically determine the optimal number of projection directions for $P$ and $Q$, respectively, in a self-learning way.
3) For KTRR, the clustering performance relays on the kernel selection.
However, the optimal type of kernel function and parameters are not always available.
Thus, it is meaningful to develop multi-kernel model based on KTRR such that it can automatically learn an optimal kernel from a set of kernel functions. 

\section*{Acknowledgment}
This work is supported by National Natural Science Foundation of China (NSFC) under Grants 61806106, 61802215, and 61806045, Shandong Provincial Natural Science Foundation, China under Grants ZR2019QF009, and ZR2019BF011; Q.C. is supported by NIH UH3 NS100606-03.

\bibliographystyle{ieeetran}
\bibliography{kernel_trr}

\end{document}